\newtheorem{theorem}{Theorem}
\newtheorem{defi}{Definition}
\newcommand{\rise}{RELIS}
\newcommand{\nlg}{NLG}
\newcommand{\ctopic}{cross-input}
\newcommand{\intopic}{input-specific}
\newcommand{\NDCG}{\textsc{ndcg}}
\newcommand{\ChM}[2][]{\textcolor{black}{#2}}
\newcommand{\MM}[2][]{\textcolor{black}{#2}}
\newcommand{\MMN}[1]{\textcolor{red}{(\textbf{MM:} #1)}}
\newcommand{\newcite}[1]{\citeauthor{#1} [\citeyear{#1}]}
\title{
Reward Learning for Efficient 
Reinforcement Learning in  \\
Extractive Document Summarisation
}
\author{
Yang Gao$^{1}$\footnote{Work performed while at 
UKP-TUDA.}
\and
Christian M. Meyer$^2$\and
Mohsen Mesgar$^{2}$\And
Iryna Gurevych$^2$
\affiliations
$^1$Dept. of Computer Science, Royal Holloway, University of London\\
$^2$Ubiquitous Knowledge Processing Lab (UKP-TUDA),
Technische Universit{\" a}t Darmstadt\\
\emails
yang.gao@rhul.ac.uk,
\{meyer,mesgar,gurevych\}@ukp.informatik.tu-darmstadt.de
}
\begin{document}

\maketitle

\begin{abstract}
\emph{Document summarisation}
can be formulated as a sequential decision-making problem,
which can be solved by \emph{Reinforcement Learning (RL)} algorithms. 
The predominant RL paradigm for summarisation learns a \emph{cross-input}
policy, which requires considerable time, data and parameter tuning
due to the huge search spaces and the delayed rewards.
Learning \emph{input-specific} RL policies is a
more efficient alternative\MM[,]{} but so far depends on handcrafted rewards,
which are difficult to design and yield poor performance.
We propose \rise{}, a novel RL paradigm
that learns a reward function
with \emph{Learning-to-Rank (L2R)} algorithms 
at training time and uses this reward function to train an 
\mbox{input-specific} RL policy at test time.
We prove that \rise{} guarantees to 
generate \mbox{near-optimal} summaries 
with appropriate L2R and RL algorithms.
Empirically, we evaluate our approach on 
extractive multi-document summarisation.
We show that \rise{} reduces the training 
time by two orders of magnitude compared to 
the state-of-the-art models while performing on par with them.   

\end{abstract}

\section{Introduction}
\label{sec:intro}
\emph{Extractive document summarization}, as a challenging 
instance of \emph{natural language generation} (NLG), 
is a popular summarisation paradigm, 
which builds summaries by selecting
an appropriate sequence of important 
phrases or sentences from the input document(s).
%
%
%
Extractive summarisation can be formulated as 
a sequential decision-making problem,
and hence can be tackled by the 
\emph{Reinforcement Learning (RL)} algorithms. 
RL searches for the (near-)optimal \emph{trajectories}
(i.e.\ sequences of decisions) by 
directly optimising the objective functions,
e.g. the ROUGE metrics \cite{lin2004rouge}. 
Such objectives are non-differentiable and therefore 
difficult to be directly optimised by deep neural networks.
In addition, RL alleviates the \emph{exposure bias} 
problem faced by sequential supervised learning paradigms in \nlg{}.
Combining RL algorithms such as REINFORCE \cite{DBLP:journals/ml/Williams92}
with neural techniques (e.g.\ sequence-to-sequence)
yields \mbox{state-of-the-art} performance in 
summarisation \cite{DBLP:journals/corr/abs-1802-08636,DBLP:journals/ijon/YaoZLW18}.

\begin{figure*}
    \centering
    \includegraphics[width=.95\textwidth]{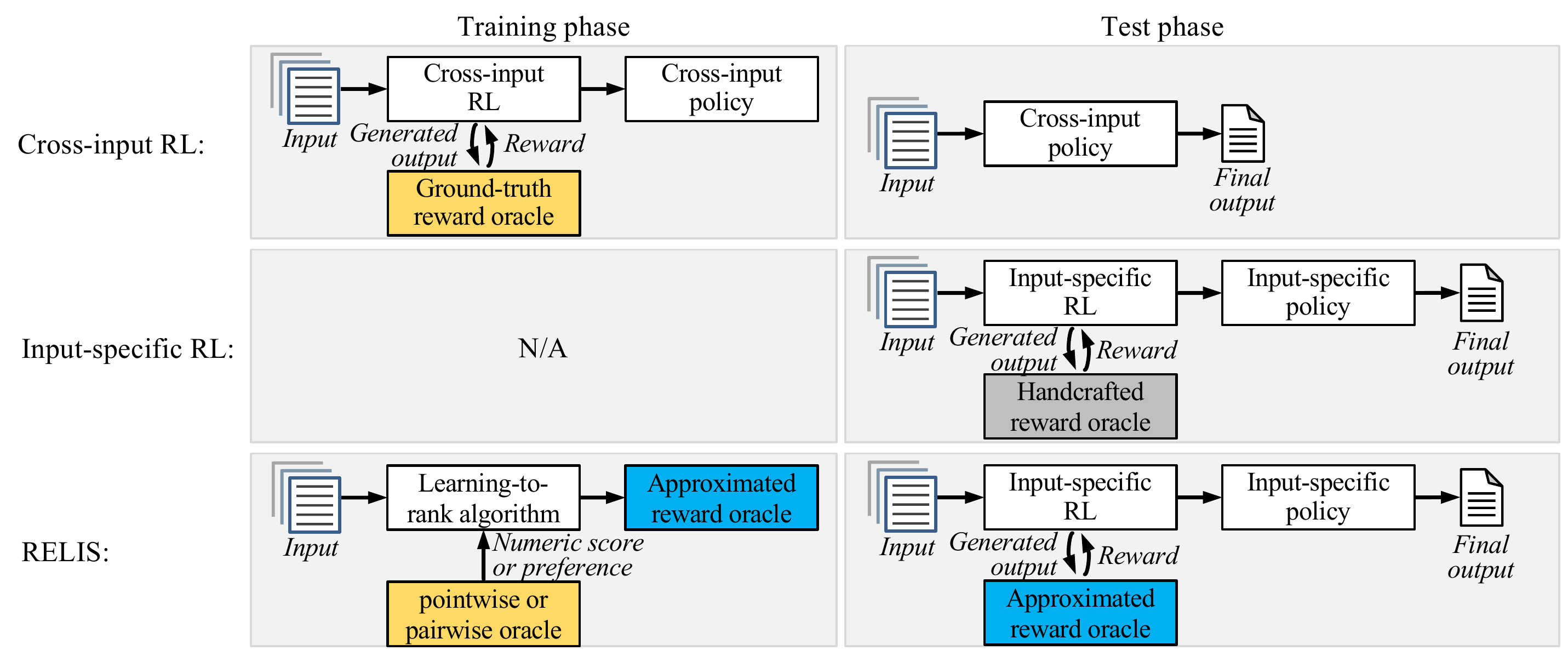}
    \caption{\MM[W]{The w}orkflows of \ctopic{} RL (top),
    \intopic{} RL (\MM[centre]{middle}) and \rise{} (bottom).
    The ground-truth reward can by 
    \MM[either provided]{provided} by humans or 
    automatic metrics (e.g.\ BLEU or ROUGE) 
    measuring the similarity between generated output text and the reference output.}
    \label{fig:workflows}
\end{figure*}

    
Existing RL-based summarisation systems fall into two
categories: \emph{\ctopic{} RL} and \emph{\intopic{}} RL.
For \mbox{\ctopic{}} RL
(upper part in Fig.~\ref{fig:workflows}),
at training time, RL agents interact with a
\emph{\ChM[gold-standard]{ground-truth} reward oracle} for multiple \emph{episodes}
so as to learn a \emph{policy} that maximises the
accumulated rewards in the episode;
at test time, the learnt policy is applied to
unseen data to generate the summary. 
However, learning such \mbox{\ctopic{}} policies
is very expensive because of the huge search space. 
Another issue is the \emph{delayed rewards}:
the ROUGE scores can be calculated only \MM[after]{when} 
the complete summary is generated.
Such delayed rewards cause RL-based summarisers 
to take even longer time to converge.
Although multiple techniques have been proposed
to speed up the training of RL, e.g.\ memory replay 
\cite{mnih2015human},
MIXER \cite{DBLP:journals/corr/RanzatoCAZ15} and 
imitation learning \cite{DBLP:conf/uai/ChengYWB18},
training \ctopic{} RL 
yet requires considerable time, data and parameter tuning.

On the other hand, \intopic{} RL 
(middle part in Fig.~\ref{fig:workflows})
neither requires parallel data (i.e.
input documents and reference summaries for them) 
nor a reward oracle.
Instead, it employs a handcrafted reward function 
with which the RL agent interacts in order to learn a policy specifically
for the given input.
By doing so, the size of the search space significantly 
decreases, which consequently diminishes the training time and computational resources.  
However, designing such a reward function is highly challenging 
as it should fit all inputs \cite{rioux2014emnlp}.
This explains the poor performance of \intopic{} RL 
summarisers \cite{ryang2012emnlp}.

To tackle the problems faced by
the above two RL-based summarisation paradigms, 
we propose a novel 
paradigm called 
\emph{REward Learning for Input-Specific reinforcement learning (\rise).}
Instead of learning a \ctopic{} policy, \rise{} learns
a \emph{\mbox{\ctopic{}} reward oracle} at training time,
\MM[and]{and then} uses the learnt reward to train 
an \intopic{} policy for each input at test time 
(bottom part in Fig.~\ref{fig:workflows}).
\rise{} is inspired by \emph{inverse RL}
\cite{abbeel_ng2004app_learning}, which requires
a demonstrator to present optimal trajectories. 
Because such a demonstrator is hardly available
in summarisation, \rise{} leverages \emph{Learning-to-Rank (L2R)}
algorithms \cite{DBLP:journals/ieicet/Li11}
to approximate the \ChM[gold-standard]{ground-truth} reward oracle
from ``weak supervisions'', \MM[e.g.]{e.g.,}
numeric scores that indicate the quality of the 
summary or preferences over summary pairs.
%

Our contributions are threefold: 
\textbf{(i)}
We propose RELIS (\S \ref{sec:background}), 
a new RL-based summarisation framework
that enjoys the strong performance of \ctopic{} RL
and the low computational cost of \intopic{} RL.
\textbf{(ii)}
Theoretically, we prove that
by employing appropriate L2R and RL algorithms, 
\rise{} guarantees to generate near-optimal outputs (\S \ref{sec:proof}).
\textbf{(iii)} 
Empirically, we evaluate \rise{} in
multi-document extractive summarisation 
(\S \ref{sec:experiments}). 
Compared to the state-of-the-art methods, 
\rise{} provides comparable 
performance 
but requires much less the training time or data. 
Because the proof of \rise{} is generic, 
we believe \rise{} has the potential to be
applied to other \nlg{} tasks,
e.g.\ translation and sentence simplification.
%
Source code and supplementary material are available at 
\url{https://github.com/UKPLab/ijcai2019-relis}.

\section{\rise{}}
\label{sec:background}

We first formally define the 
summarisaiton task,
and then detail the L2R and RL module of \rise{}.

\subsection{Extractive Summarisation}
In line with \newcite{DBLP:conf/acl/PeyrardE17a}, 
we formulate summarisation as a 
\emph{discrete optimisation problem}. 
Let $\mathcal{X}$ be the set of all possible input documents
and $\overline{\mathcal{X}}$ be the set of 
inputs available at training time.
An input can be either a single document or a cluster
of documents on the same topic.
For input $x \in \mathcal{X}$, let $\mathcal{Y}_x$ 
indicate the set of all extractive summaries for $x$
that comply with the length requirement. 
%
%
Then, the task of summarisation is to map each input $x$ 
to its best summary in $\mathcal{Y}_x$ with respect to
a \emph{ranking function} 
$\sigma_x\colon \mathcal{Y}_x \rightarrow \mathbb{N}$.
For a candidate summary $y\in\mathcal{Y}_x$, $\sigma_x$ 
returns the number of candidates 
in $\mathcal{Y}_x$ that have
equal or lower quality than $y$, including $y$ itself.
For example, if $y$ is the highest-quality candidate 
in $\mathcal{Y}_x$, then $\sigma_x(y) = |\mathcal{Y}_x|$.
%
$\sigma_x$ can be obtained from human evaluators,
automatic 
metrics (e.g.\ ROUGE) or heuristics measuring
the quality of outputs. We denote the ground-truth ranking
function on $\mathcal{Y}_x$ by $\sigma^*_x$. 

Given the above definition of summarisation,
we define a summariser agent as a tuple $(\sigma_x,M_x)$,
where $M_x$ is an \emph{optimisation model}
for finding the (near-)optimal summary
with respect to $\sigma_x$.
In the \ctopic{} paradigm, the RL agent 
learns a policy that solves the optimisation 
problem $M_x$ for any $x \in \mathcal{X}$ at training time.
In \rise{}, instead, an agent learns a ranking 
$\Hat{\sigma}_x^U\colon \mathcal{Y}_x \rightarrow \mathbb{N}$ 
at training time so that 
$\Hat{\sigma}_x^U$ is as ``close'' as possible to
$\sigma^*_x$ (``close'' will be formally defined 
in \S\ref{sec:proof}).
At test time,
for each $x^{\prime} \in \mathcal{X} \setminus \overline{\mathcal{X}}$,
\rise{} formulates $M_{x^\prime}$ as a
\emph{Markov Decision Process (MDP)} and learns an 
RL policy specifically for 
$M_{x^{\prime}}$.

\subsection{Learning to Rank (L2R)}
L2R algorithms learn to reconstruct the ranking over objects from an \emph{oracle} \cite{DBLP:journals/ieicet/Li11}.
L2R induces the approximated
ranking $\Hat{\sigma}_x^U$ by learning a \emph{utility function} $U$ from the oracle, such that $\Hat{\sigma}_x^U(y_1) > \Hat{\sigma}_x^U(y_2)$
if and only if $U(y_1, x) > U(y_2, x)$.
L2R can learn from different \MM{types of} oracles, including \emph{pointwise} oracles that provide \mbox{point-based} scores
for objects, 
\emph{pairwise} oracles that provide preferences
over pairs of objects, and \emph{listwise} oracles that
provide certain metrics (e.g.\ accuracy) for candidate rankings.
%
\MM[In this work]{Here}, we focus on pointwise and pairwise oracles \MM[, 
as]{as} humans \MM[can reliably]{reliably} provide such judgements for short texts  
\cite{DBLP:conf/acl/RiezlerKU18}.
\MM[We can approximate $U$]{Function $U$ can be learnt by} any
function approximation \MM[techniques]{technique}, e.g., neural networks. 

In pointwise L2R, for every $x \in \overline{\mathcal{X}}$, we draw
$N$ sample summaries from $\mathcal{Y}_x$ using some sampling strategy,
e.g., random sampling without replacement.
Then, we query their $\sigma^*_x$ values from the oracle
and use a regression algorithm to 
minimise the averaged mean squared error (MSE)
between $U$ and $\sigma^*_x$\MM[on 
the sampled $N$ outputs $y_i \in \mathcal{Y}_x$.]{.}
Formally, the loss function is
\begin{align}
\mathcal{L}^\mathrm{MSE} = \frac{1}{N \cdot |\overline{\mathcal{X}}|}
\sum_{x \in \overline{\mathcal{X}}} \:\sum_{i=1}^{N} \: 
(\sigma^*_x(y_i)-U(y_i,x))^2.
\label{eq:mseloss}
\end{align}

In pairwise L2R, for every $x \in \overline{\mathcal{X}}$
we sample $K$ pairs of summaries from $\mathcal{Y}_x$
and then query the oracle about their preferences. 
We denote the collected preferences 
\ChM[among the $N$ sampled candidate pairs for]{for} input $x$ by 
\mbox{$P_x = \{p_1(y_{1,1},y_{1,2}), \cdots, p_K(y_{K,1},y_{K,2})\}$}, 
where $y_{i,1}, y_{i,2} \in \mathcal{Y}_x$ 
are the candidate outputs presented to the oracle in the 
$i^\mathrm{th}$ iteration;
$p_i(y_{i,1},y_{i,2})$ equals $1$ 
if the oracle prefers $y_{i,1}$ over $y_{i,2}$,
and equals $0$ otherwise.
Different loss functions can be used to learn $U$ 
from the preferences in $P_x$.
First, we consider the cross-entropy loss\MM[, also known as the Bradley-Terry model]: 
%
\begin{align}
\mathcal{L}^\mathrm{CE} = - &\:
      \sum_{x \in \overline{\mathcal{X}}} \:
      \sum_{p_i(y_{i,1}, y_{i,2}) \in P_x} \!\!\!\!\!  
      \quad[\;  p_i(y_{i,1}, y_{i,2})\,\log \mathcal{P}(y_{i,1}, y_{i,2}) \nonumber \\ 
      & +\;   p_i(y_{i,2}, y_{i,1})\,\log \mathcal{P}(y_{i,2}, y_{i,1}) \, ],
      \label{eq:ce_loss}
\end{align}
where $\mathcal{P}(y_1, y_2) = (1+\exp[U(y_2,x)-U(y_1,x)])^{-1}$.
\ChM[Another widely used loss function for pairwise L2R]{An 
alternative}
is the margin ranking \ChM[loss, also known as the 
pairwise hinge loss:]{(a.k.a.\ pairwise hinge) loss:}
\begin{align}
\mathcal{L}^\mathrm{MR} = & 
\frac{1}{N \cdot |\overline{\mathcal{X}}|} 
\sum_{x \in \overline{\mathcal{X}}} \: \sum_{p_i(y_{i,1}, y_{i,2}) \in P_x}  
\max[0, \nonumber \\ 
& 1-e_i \cdot (U(y_{i,1},x) - U(y_{i,2},x))],
\label{eq:hinge_loss}
\end{align}
where $e_i = 1$ if $y_{i,1}$ is preferred over $y_{i,2}$,
and $e_i = -1$ otherwise. 
Additionally, we consider an improved margin ranking loss
proposed by \newcite{DBLP:conf/ecir/AgarwalC10},
which gives large penalties to \ChM{mis-ranked} pairs with wide margins\ChM[
but are mis-ranked:]{:}
\begin{align}
\mathcal{L}^\mathrm{IM} = & \frac{1}{N \cdot |\overline{\mathcal{X}}|}
\sum_{x \in \overline{\mathcal{X}}} \: \sum_{p_i(y_{i,1}, y_{i,2}) \in P_x} 
\max[0, \nonumber \\
& \hspace*{-.5cm}|\sigma^*_x(y_{i,1}) - \sigma^*_x(y_{i,2})|  
- e_i (U(y_{i,1},x) \!\! - \!\! U(y_{i,2},x))].
\label{eq:weighed_hinge_loss}
\end{align}
Note that the improved margin ranking loss is a mix of
pointwise and pairwise L2R \MM[, because]{as} it requires
both \ChM[the $\sigma^*_{x}$ values and the preferences over
the sampled pairs of outputs]{$\sigma^*_{x}$ and $P_x$}.%

\subsection{Reinforcement Learning (RL)}
RL amounts to algorithms 
that search for optimal solutions in 
\emph{Markov Decision Processes (MDPs)}.
We formulate the optimisation problems $M_x$ for \nlg{}
tasks as episodic MDP \cite{ryang2012emnlp,rioux2014emnlp}.
An episodic MDP is a tuple $(S,A,P,R,T)$, where $S$ is \ChM[the]{a} set of 
\emph{states}, $A$ is \ChM[the]{a} set of \emph{actions}, 
$P\colon S \times A \to S$ 
is the \emph{transition function} with $P(s,a)$ giving
the next state after performing action $a$ in state $s$\MM[.]{, }
$R\colon S \times A \to \mathbb{R}$
is the \emph{reward function} with $R(s,a)$ giving the immediate reward for
performing action $a$ in state $s$ and
$T \subseteq S$ is \ChM[the]{a} set of 
\emph{terminal states} that mark the end of an episode. 
%
Furthermore, to formulate the delayed rewards in summarisation,
we let $R(s,a) = 0$ if $P(s,a) \not \in T$,
so as to ensure that non-zero rewards only appear 
in the last step of each episode.

In extractive summarisation, the components in 
$M_x$ are defined as follows.
%
$S$ is the set of all possible states, i.e.\ permutations
of sentences from $x$. 
Note that $\mathcal{Y}_x$ is a subset of $S$ because
$\mathcal{Y}_x$ only includes the summaries complying with the length requirement, but $S$ includes all possible extractive summaries. 
\ChM[$A$ consists of two action categories]{Two types of action constitute $A$}: \emph{add} a new sentence from the input document cluster $x$ to the current draft summary \MM[or]{and} \emph{terminate}
the generation. $T$ includes all the over-length
summaries and an \emph{absorbing state} \MM[$s_T$,]{$s_T$.}
\MM[such that if]{If} action $a$ is terminate,
$P(s,a) = s_T$ regardless of the current state $s$.
The reward function $R(s,a)$ returns $\sigma_x(s)$ if 
action $a$ is terminate, 
a negative reward if the current state $s$ is 
an over-length summary and 0 otherwise.
We denote this MDP by $M_{x}(\sigma_x)$ to highlight that
it uses $\sigma_x$ as rewards.

A \emph{policy} $\pi\colon S \times A \to \ChM[\mathbb{R}]{[0,1]}$ 
defines how actions are selected: 
$\pi(s,a)$ is the probability
of selecting action $a$ in state $s$.
In the context of summarisation,
we slightly abuse the notation by letting
$\pi(y|x)$ denote the probability of policy $\pi$
generating $y \in \mathcal{Y}_x$ given input $x$.
\MM[and w]{W}e say a policy $\pi$ is \emph{proper} if
$\sum_{y \in \mathcal{Y}_{x}} \pi(y|x) = 1$, i.e.\ $\pi$ does not 
generate illegal outputs.
Then, the expected reward of performing proper policy $\pi$ is:
\begin{align}
\mathcal{R}_{\sigma_x}(\pi|x)  = \mathbb{E}_{y \sim \pi}
\sigma_x(y) = \sum\limits_{y \in \mathcal{Y}_x} \pi(y|x)\sigma_x(y).
\label{eq:rl_obj}
\end{align}
The goal of RL is to find the optimal policy $\pi^*$
that has the highest expected reward: 
$\pi^* =\arg\max_{\pi}\mathcal{R}_{\sigma_x}(\pi|x)$.
Note that $\pi^*_x$ is a probability
distribution that assigns non-zero probabilities only
to the optimal outputs for $x$, i.e.\ \mbox{$\pi^*(y|x)>0$} 
only if $\sigma_x(y) = |\mathcal{Y}_x|$.
Hence \mbox{$\mathcal{R}_{\sigma_x}(\pi^*|x) = |\mathcal{Y}_x|$}.

\section{Proof of Convergence for \rise{}}
\label{sec:proof}
%

In this section, we prove that if the errors in both
L2R and RL are bounded, the error of 
the final output of \rise{} is a linear polynomial of the
errors from the two steps.
First, we define \emph{convergent} L2R,
which can learn an approximated ranking $\Hat{\sigma}_x^U$
``close'' to the ground-truth ranking $\ChM[\sigma_x]{\sigma_x^*}$. 
\begin{defi}
\label{def:pl_convergent}
Let $U$ be the utility function learnt
by an L2R algorithm. For $x \in \mathcal{X}$,
let $\Hat{\sigma}^U_x$ be the ranking derived 
from $U$ on $\mathcal{Y}_x$.
The L2R algorithm is 
\emph{$\epsilon$-convergent} with respect to $\ChM[\sigma_x]{\sigma_x^*}$ if
there exists 
a non-negative integer $\epsilon$
such that 
\begin{align}
\max_{y\in\mathcal{Y}_x} 
|\hat{\sigma}^U_x(y) - \ChM[\sigma_x]{\sigma_x^*}(y)| \leq \epsilon.
\label{eq:pl_convergent}
\end{align}
\end{defi}

Some L2R algorithms are $\epsilon$-convergent
under realistic conditions.
For example, the quick-sort based L2R algorithm proposed by
\newcite{DBLP:conf/icml/MaystreG17} is $\epsilon$-convergent
with respect to pairwise oracles, when it 
obtains a sufficiently large number of preferences.
Next, we define convergent RL algorithms \MM[, 
which]{that} guarantee to learn near-optimal policies. 

\begin{defi}
\label{def:rl_convergent}
Let $M_x(\ChM[\sigma]{\sigma_x})$ denote the episodic MDP 
for \MM[the]{an} \nlg{} task \MM[for]{given} 
input $x \in \mathcal{X}$,
which uses a 
ranking function $\ChM[\sigma]{\sigma_x}$ over $\mathcal{Y}_x$ 
as its reward function.
An RL algorithm is \MM{\mbox{\emph{$\delta$-convergent}}} for $M_x(\ChM[\sigma]{\sigma_x})$ if
there exists 
\MM[$\delta \in \mathbb{R}, \delta \geq 0$ ]{$\delta \in \mathbb{R}_{\geq 0}$}
such that
\begin{align}
\lim \limits_{n \to \infty} 
\mathcal{R}_{\ChM[\sigma]{\sigma_x}}(\pi^*|x) - \mathcal{R}_{\ChM[\sigma]{\sigma_x}}(\pi_n|x)  \leq \delta,
\label{eq:rl_convergent}
\end{align}
where $\pi^*$ is the optimal policy for $M_x(\ChM[\sigma]{\sigma_x})$
and $\pi_n$ is the policy of the RL algorithm
after $n$ episodes of learning.
\footnote{Since $\pi^*$ is the optimal policy for $\ChM[\sigma]{\sigma_x}$, 
$\mathcal{R}_{\ChM[\sigma]{\sigma_x}}(\pi^*|x)$ is always higher than the expected reward of any other policy (cf. Eq.~\ref{eq:rl_obj}).
}
\end{defi}
\MM[Multiple]{Many} RL algorithms have been proven to be \MM[$\delta$-convergent]{\mbox{$\delta$-convergent}},
including value-based RL \cite{sutton2009fast}, 
actor-critic \MM[algorithms \cite{DBLP:conf/nips/SuttonMSM99}]{\cite{DBLP:conf/nips/SuttonMSM99}} and policy \MM[gradient algorithms]{gradient} 
\cite{DBLP:conf/icml/Fazel0KM18}.

Recall that for an input $x$ from the test 
set $\mathcal{X} \setminus \overline{\mathcal{X}}$,
our \ChM[target]{goal} is to find the optimal policy $\pi^*$
for the \nlg{} tuple $(\ChM[\sigma_x]{\sigma_x^*}, M_x(\ChM[\sigma_x]{\sigma_x^*}))$
(see \S\ref{sec:background}). 
The theorem below shows that if \rise{} uses 
convergent L2R and RL, its output is near-optimal,
i.e. the error between the \rise{} output and 
the optimal output is bounded.

\begin{theorem}
\label{the_theorem}
For a test input $x \in \mathcal{X} \setminus \overline{\mathcal{X}}$, 
we denote the optimal \nlg{} agent for $x$ by $(\sigma^*_x, M_x(\sigma^*_x))$,
and its \rise{} approximation
by $(\hat{\sigma}_x^{\ChM{U}}, M_x(\hat{\sigma}_x^{\ChM{U}}))$,
where $\Hat{\sigma}_x^{\ChM{U}}$ is the approximation of $\sigma^*_x$
learnt by an $\epsilon$-convergent L2R algorithm.
Let $\pi^*$ be the optimal policy for $M_x(\sigma^*_x)$
and $\Hat{\pi}_n$ the proper policy for $M_x(\hat{\sigma}_x^{\ChM{U}})$
learnt by a $\delta$-convergent RL algorithm for $n$ episodes.
Then, as $n$ approaches positive infinity, 
$\mathcal{R}_{\sigma^*_x}(\hat{\pi}_n|x) \geq 
\mathcal{R}_{\sigma^*_x}(\pi^*|x) - \delta - \epsilon$.
\end{theorem}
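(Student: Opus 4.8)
The plan is to bound the true expected reward of the learnt policy $\hat{\pi}_n$ by routing through the surrogate reward $\hat{\sigma}_x^U$ on which it was actually trained, and to control the mismatch between the two reward functions using the $\epsilon$-convergence of L2R.

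First I would establish a \emph{reward-transfer} estimate: for any proper policy $\pi$, the expected rewards under the true and the approximated rankings differ by at most $\epsilon$. Since $\pi(\cdot\,|x)$ is a probability distribution over $\mathcal{Y}_x$, we have $\mathcal{R}_{\sigma^*_x}(\pi|x) - \mathcal{R}_{\hat{\sigma}_x^U}(\pi|x) = \sum_{y\in\mathcal{Y}_x}\pi(y|x)\,[\sigma^*_x(y)-\hat{\sigma}_x^U(y)]$, and applying the triangle inequality together with the $\epsilon$-convergence bound of \cref{def:pl_convergent} (\cref{eq:pl_convergent}) and the identity $\sum_{y}\pi(y|x)=1$ yields $|\mathcal{R}_{\sigma^*_x}(\pi|x)-\mathcal{R}_{\hat{\sigma}_x^U}(\pi|x)|\leq\epsilon$. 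Instantiated at $\hat{\pi}_n$, this gives $\mathcal{R}_{\sigma^*_x}(\hat{\pi}_n|x)\geq\mathcal{R}_{\hat{\sigma}_x^U}(\hat{\pi}_n|x)-\epsilon$.

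Next I would invoke the RL guarantee. Letting $\hat{\pi}^*$ denote the optimal policy for $M_x(\hat{\sigma}_x^U)$, the $\delta$-convergence of the RL algorithm (\cref{def:rl_convergent}, \cref{eq:rl_convergent}) gives, as $n\to\infty$, $\mathcal{R}_{\hat{\sigma}_x^U}(\hat{\pi}_n|x)\geq\mathcal{R}_{\hat{\sigma}_x^U}(\hat{\pi}^*|x)-\delta$. Chaining this with the reward-transfer estimate produces $\mathcal{R}_{\sigma^*_x}(\hat{\pi}_n|x)\geq\mathcal{R}_{\hat{\sigma}_x^U}(\hat{\pi}^*|x)-\delta-\epsilon$.

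The final—and most delicate—step is to identify $\mathcal{R}_{\hat{\sigma}_x^U}(\hat{\pi}^*|x)$ with $\mathcal{R}_{\sigma^*_x}(\pi^*|x)$ \emph{exactly}, rather than only up to a further $\epsilon$. The naive route, comparing $\hat{\pi}^*$ to $\pi^*$ through the reward-transfer bound, would leak a spurious second $\epsilon$ and overshoot the target $-\delta-\epsilon$. The observation that avoids this is that $\sigma^*_x$ and $\hat{\sigma}_x^U$ are both ranking functions over the \emph{same} set $\mathcal{Y}_x$, so each attains maximum value $|\mathcal{Y}_x|$; hence, by the remark following \cref{eq:rl_obj}, $\mathcal{R}_{\hat{\sigma}_x^U}(\hat{\pi}^*|x)=|\mathcal{Y}_x|=\mathcal{R}_{\sigma^*_x}(\pi^*|x)$. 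Substituting this equality into the chained inequality delivers $\mathcal{R}_{\sigma^*_x}(\hat{\pi}_n|x)\geq\mathcal{R}_{\sigma^*_x}(\pi^*|x)-\delta-\epsilon$, as required. I expect the only genuine source of difficulty to be the bookkeeping of which reward function each expectation is taken under, and the insight that both optimal values coincide at $|\mathcal{Y}_x|$ to be precisely what keeps the bound tight at $-\delta-\epsilon$.
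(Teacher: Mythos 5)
Your proposal is correct and follows essentially the same route as the paper's own proof: bound the mismatch between $\mathcal{R}_{\sigma^*_x}$ and $\mathcal{R}_{\hat{\sigma}_x^U}$ for the proper policy $\hat{\pi}_n$ via $\epsilon$-convergence, apply $\delta$-convergence against $\hat{\pi}^*$, and close the argument by noting $\mathcal{R}_{\hat{\sigma}_x^U}(\hat{\pi}^*|x) = |\mathcal{Y}_x| = \mathcal{R}_{\sigma^*_x}(\pi^*|x)$. The only cosmetic difference is that you state a two-sided reward-transfer lemma for arbitrary proper policies, whereas the paper uses the one-sided bound $\hat{\sigma}_x^U(y) \leq \sigma^*_x(y) + \epsilon$ instantiated directly at $\hat{\pi}_n$; the substance, including the key observation that keeps the bound at $-\delta-\epsilon$ rather than $-\delta-2\epsilon$, is identical.
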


\begin{proof}
We denote the optimal policy for $M_x(\hat{\sigma}_x^{\ChM{U}})$
as $\Hat{\pi}^*$.
Hence from Eq.~\eqref{eq:rl_convergent} we have: 
\begin{align}
\lim_{n \rightarrow \infty} 
\mathcal{R}_{\Hat{\sigma}_x^U}(\hat{\pi}_n|x) \geq 
\mathcal{R}_{\Hat{\sigma}_x^U}(\Hat{\pi}^*|x) - \delta.
\label{eq:step1}
\end{align}
For any $y \in \mathcal{Y}_x$, 
from Eq.~\eqref{eq:pl_convergent} we have
\begin{align}
\MM[
\Hat{\sigma}_x^{U}(x) \leq \sigma^*_x(x) + \epsilon]
{\Hat{\sigma}_x^{U}(y) \leq \sigma^*_x(y) + \epsilon}.
\label{eq:extend_epsilon}
\end{align}
Using Eq.~\eqref{eq:extend_epsilon}, we can bound $\mathcal{R}_{\hat{\sigma}_x^{\ChM{U}}}(\Hat{\pi}_n|x)$ with 
any positive $n$ value (see Eq.~\eqref{eq:rl_obj}):
\begin{align}
\mathcal{R}_{\hat{\sigma}_x^{\ChM{U}}}(\hat{\pi}_n|x) & = \sum_{x \in \mathcal{Y}_x}
\Hat{\pi}_n(y|x)\hat{\sigma}_x^{\ChM{U}}(y) \nonumber \\
& \leq \sum_{x \in \mathcal{Y}_x}
\Hat{\pi}_n(y|x)[\sigma^*_x(y)+\epsilon] \nonumber \\
& = \sum_{y \in \mathcal{Y}_x}
\Hat{\pi}_n(y|x)\sigma^*_x(y)+
\epsilon \sum_{y \in \mathcal{Y}_x}
\Hat{\pi}_n(y|x) \nonumber \\
& = \mathcal{R}_{\sigma^*_x}(\hat{\pi}_n|x) + \epsilon.
\label{eq:step2}
\end{align}
%
Note that
$\sum_{y \in \mathcal{Y}_x} \Hat{\pi}_n(y|x) = 1$ in Eq.~\eqref{eq:step2}
because $\Hat{\pi}_n$ is a proper policy (see \S\ref{sec:background}). 
Combining Eq.~\eqref{eq:step1} and \eqref{eq:step2}, we get 
\begin{align}
\lim_{n \rightarrow \infty} \mathcal{R}_{\sigma^*_x}(\Hat{\pi}_n|x) \geq
\mathcal{R}_{\hat{\sigma}_x^{\ChM{U}}}(\Hat{\pi}^*|x) - \delta - \epsilon.
\label{eq:step3}
\end{align}
Since $\hat{\pi}^*$ is the optimal policy for $M_{\hat{\sigma}_x^{\ChM{U}}}$,
according to Eq. \eqref{eq:rl_obj},
$\mathcal{R}_{\hat{\sigma}_x^{\ChM{U}}}(\Hat{\pi}^*|x) = |\mathcal{Y}_x|$.
Similarly, $\mathcal{R}_{\sigma^*_x}(\pi^*|x) = |\mathcal{Y}_x|$.
Hence we can replace $\mathcal{R}_{\hat{\sigma}_x^{\ChM{U}}}(\Hat{\pi}^*|x)$
in Eq.~\eqref{eq:step3} with 
$\mathcal{R}_{\sigma^*_x}(\pi^*|x)$ and obtain
$\lim_{n \rightarrow \infty} \mathcal{R}_{\sigma^*_x}(\Hat{\pi}_n|x) \geq 
\mathcal{R}_{\sigma^*_x}(\pi^*|x) - \delta - \epsilon.
$
\end{proof}


\section{Experimental Setup}
\label{sec:experiments}

\paragraph{Task and datasets.}
We evaluate \MM[multiple variants of \rise{}]{\rise{}} for 
extractive multi-document summarisation
on three benchmark datasets from 
the Document Understanding Conferences 
(DUC)\footnote{\url{https://duc.nist.gov/}}\ChM{\ described in Table~\ref{table:datasets}}. 
Each dataset contains a set of document clusters. 
For each cluster, the target
is to create a summary with at most 100 words.
Each cluster is accompanied with several \MM{\mbox{human-generated}} summaries\ChM{,
denoted as $r_x$,} that \ChM[are used]{we use} for training and evaluation.
%
%
To decide the best parameters, we perform  
\MM{\mbox{10-fold}} cross validation on DUC'01. 


\paragraph{Ground-truth reward oracle.}
For $x \in \overline{\mathcal{X}}$,  we use 
\ChM[ROUGE to provide]{a ROUGE-based $U^*$ to induce}
the \mbox{ground-truth} $\sigma^*_{x}$\MM[:]{.} 
\MM[For a summary $y \in \mathcal{Y}_x$, w]
{W}e measure \MM[its quality]{the quality of a summary $y \in \mathcal{Y}_x$}  by 
\MM[]
{
\mbox{$U^*(y, x) = R_1(y,r_x)+2R_2(y,r_x)$,}
}
where $R_1$ and $R_2$ denote the average ROUGE-1 and ROUGE-2 
recall metrics,
respectively\ChM[, 
and $r_x$ are reference summaries for $x$ provided
by the DUC datasets.]{.} 
$R_1$ and $R_2$ are arguably the most widely used metrics
to approximate human evaluation of summary quality. 
\ChM[Since for optimal summaries in DUC,]{The} $R_2$ scores 
\ChM{for optimal extractive summaries} are usually half of their
$R_1$ scores \MM[(cf.\ \cite{DBLP:conf/emnlp/GaoMG18})]{\cite{DBLP:conf/emnlp/GaoMG18}}\ChM[;]{, so} 
we multiply $R_2$ \MM[with]{by} $2$
to balance the impact of $R_1$ and $R_2$. 
Next, we describe how we approximate the ground-truth reward.

\begin{table}[t]
  \centering\small
    \begin{tabular}{l c  c  c}
      \toprule
      Dataset & \#\,\ChM[DocCluster]{Cluster} & \#\,Doc & \#\,Sent/Cluster\\
      \midrule
      DUC$\,$'01 &  30 & 308 & 378 \\
      DUC$\,$'02 &  59 & 567 & 271 \\
      DUC$\,$'04 &  50 & 500 & 265 \\
      \bottomrule
    \end{tabular}
  \caption{\MM{Some} statistics of the DUC datasets: \MM{the} number of document clusters (\#\,Cluster), \MM{the} total number of documents (\#\,Doc) and \MM{the} average number of sentences per cluster (\#\,Sent/Cluster). 
  }
  \label{table:datasets}
\end{table}

\subsection{Approximated Reward Oracle}
Recall that the goal of L2R is to learn the utility function $U$ 
using pointwise or pairwise preferences 
(see \S\ref{sec:background}). 
In pointwise L2R, we sample $N=3000$ different summaries
for each document cluster in $x \in \overline{\mathcal{X}}$
\ChM[, \MM{where} all meeting the 
100-word \MM{length} requirement and are generated by randomly selecting
sentences from the input documents.]{ by randomly selecting sentences from $x$ until the length limit is reached.}
Using larger $N$ does not significantly increase
the quality of $\hat{\sigma}_x^U$ \MM[, but]{but} increases the training time.
We use the ground-truth ranking $\sigma^*_x$ over the $|\overline{\mathcal{X}}| \cdot N$ \ChM[sampled summaries]{samples} to learn the utility function $U$ by minimising the MSE loss \ChM[in]{from} Eq.~\eqref{eq:mseloss}.
In pairwise L2R, \ChM[from the $|\overline{\mathcal{X}}| \cdot N$ summaries sampled in pointwise L2R, we can make at most 
$|\overline{\mathcal{X}}| \frac{N \cdot (N - 1)}{2}$ pairs of summaries.
To balance the performance and training time, 
we randomly draw $10^5$ pairs to compute the function $U$]{%
we randomly draw $K=10^5$ of the $|\overline{\mathcal{X}}| \frac{N \cdot (N - 1)}{2}$ \MM[theoretically]{} possible pairs of the 
$|\overline{\mathcal{X}}| \cdot N$ sampled summaries \MM[and use them to]{to} compute $U$} 
by minimising $\mathcal{L}^{\mathrm{CE}}$, 
$\mathcal{L}^{\mathrm{MR}}$ 
or 
$\mathcal{L}^{\mathrm{IM}}$ 
from \mbox{Eq.~\eqref{eq:ce_loss} -- \eqref{eq:weighed_hinge_loss}}.
Preliminary results suggest that increasing 
$K$ to $10^6$ does not benefit the performance but
significantly slows down the training,
while decreasing it to $10^4$ significantly harms the quality of $\hat{\sigma}^U_x$. 


\ChM{We use a linear model to approximate the utility function, i.e.\ $U(y,x) = w \cdot \phi(y,x)$, where $\phi(y,x)$ encodes $y$ for input $x$ as a vector by concatenating the following features:}

\begin{itemize}
\item 
The negative Jensen-Shannon divergence between 
unigrams, bigrams and named entities in $y$ and $x$.
\item 
The summary quality evaluation heuristics 
proposed by \newcite{rioux2014emnlp}.
\item 
TFIDF values averaged over all tokens
and named entities in $y$ \cite{DBLP:conf/naacl/PeyrardG18}.
\item 
The average number of document clusters a word or 
named entity appears in.
\item 
Rate of named entities from $x$ appearing in $y$.
\item 
The percentage of tokens in $y$ that belong to some named entities in $x$.
\item
The redundancy feature proposed by \newcite{DBLP:conf/naacl/PeyrardG18},
applied to unigrams, bigrams and named entities.
\end{itemize}

We additionally use two 
Convolutional Neural Network (CNN) architectures to generate
auto-encoded features: \mbox{StdCNN} \MM[proposed
by \newcite{DBLP:conf/emnlp/Kim14}]{\cite{DBLP:conf/emnlp/Kim14}}
and PriorSum \MM[proposed by \newcite{DBLP:conf/acl/CaoWLLZW15}]{\cite{DBLP:conf/acl/CaoWLLZW15}}.
To train \MM[the CNN-based]{these} auto-encoders, we \MM{follow \newcite{DBLP:conf/acl/CaoWLLZW15} and}
feed only the embeddings of the words \MM[in]{of} $y$ into 
these models\MM[, and]{. We } add a linear layer as the last layer to map the output vector of these models to a $U$ value. 
%
Full settings of \MM[the CNNs]{these models} are in the supplementary  material.
%
For each $x \in \mathcal{X} \setminus \overline{\mathcal{X}}$,
we measure the quality of the \ChM[L2R obtained ranking function]{approximated ranking}
 $\hat{\sigma}^U_x$
by its ranking correlation to the \ChM[ranking]{ground-truth} $\sigma^*_x$%
\MM[
induced from $U^*$.]
{
.
} 
We consider two ranking correlation metrics:
Spearman's $\rho \MM{\ \in [-1,1]}$ and the Normalized Discounted Cumulative Gain \MM{(\NDCG{})} 
on the top $1\%$ items\MM[(\NDCG{})].
\NDCG{} \MM{$\in [0,1]$} \ChM[is a metric that compares ranked lists and ]{}puts more emphasis on the top elements with 
logarithmic decay weighting \MM[\cite{DBLP:journals/tois/JarvelinK02}].
\MM[The $\rho$ values are between $-1$ and $1$, while the
\NDCG{} values are between 0 and 1]{}For both metrics, higher values \MM[mean]{indicate} \MM[higher correlation]{stronger correlations} .

%
We train the utility function $U$ with 
the loss functions in Eq.~\eqref{eq:mseloss}-\eqref{eq:weighed_hinge_loss}
and we find that using different loss functions does 
not significantly\footnote{In all experiments, 
we preform statistical significant test by 
\mbox{two-tailed} $t$-test and \ChM[p\_value]{$p$-value} $<0.01$. }
change performance (see supplementary material). 
However, the cross-entropy loss consistently
\MM[yield]{results in} marginally better performance than the others,
and hence we use it throughout our experiments.
%
%
We find that under 
all \MM{examined} settings\MM[we have tested], 
the CNN-based features \MM[fail to outperform]{underperform} the other features\MM[, and u]{. U}sing the CNN-based features together with
other features \MM[fail to significantly improve]{also worsen} 
 the quality of $\hat{\sigma}^U_x$.
The reason is because both PriorSum and StdCNN
only encode the summaries' \emph{document-independent features}
\cite{DBLP:conf/acl/CaoWLLZW15}. 
\MM[To encode]{Encoding} \MM[the]{\mbox{document-dependent}} features
requires more sophisticated neural models
\MM[, e.g., \newcite{DBLP:conf/aaai/WuH18} or \newcite{DBLP:journals/corr/abs-1802-08636}]
{\cite{DBLP:conf/aaai/WuH18,DBLP:journals/corr/abs-1802-08636}}
which require considerable time, data and
parameter tuning\MM[, and hence]{. This}  undermines the benefits of \rise{}.
We leave the research for efficient reward representation
learning for future work.

\begin{table}[!t]
    \centering\small
    \begin{tabular}{l *{3}{ c @{\hspace{3mm}} c }}
    \toprule
      & \multicolumn{2}{c}{DUC'01} & \multicolumn{2}{c}{DUC'02} 
      & \multicolumn{2}{c}{DUC'04} \\
     & $\rho$ & \NDCG  & $\rho$ & \NDCG & $\rho$ & \NDCG \\
     \midrule
     ASRL & .176 & .555 & .131 & .537 & .145 & .558 \\
     REAPER & .316 & .638 & .301 & .639 & .372 & .701\\
     JS & .549 & .736 & .525 & .700 & .570 & .763 \\
     Our $\hat{\sigma}^U_x$  & \textbf{.601} & \textbf{.764} 
     & \textbf{.560} & \textbf{.727} & \textbf{.617} & \textbf{.802} \\
     \bottomrule
    \end{tabular}
    \caption{\ChM{\MM[C]{The} correlation of approximated and ground-truth ranking. $\hat{\sigma}^U_x$ has significantly higher correlation over all other approaches.}}
    \label{table:correlation}
\end{table}

\subsection{\rise{} Setup}
We test \rise{} on all three DUC datasets as follows.
In line with \newcite{DBLP:conf/aaai/CaoLLW17} and \newcite{DBLP:journals/tois/RenCRWNMR18},
we split train and test data in the ``leave-one-out'' \MM[manner,]{manner} 
so that documents from two datasets are used as the training set, 
$\overline{\mathcal{X}}$, and documents from the rest as 
the test set $\mathcal{X} \setminus \overline{\mathcal{X}}$.
%
%
In each run in the ``leave-one-out'' experiments,
we randomly select 30\% data from the training set as the
dev set, and select the model with the best
performance on the dev set.
We use Adam 
with \MM{initial} learning rate $10^{-2}$.
The number of epochs is $10$ and batch size is $2$.
As for RL in \rise{},
we use the same \emph{temporal difference} algorithm
as \newcite{rioux2014emnlp}.
Full details of our parameter selection
and results of using different loss functions
are in the supplementary material.

\section{Results}
\label{sec:results}

Table~\ref{table:correlation} compares the quality
of our $\hat{\sigma}^U_x$
\MM[(using the features selected in \S\ref{sec:experiments})
and]{with} other widely used rewards for \intopic{} RL (see \S\ref{sec:experiments}).
$\hat{\sigma}^U_x$ has significantly higher correlation
to the ground-truth ranking compared \MM[to]{with} \ChM[other
rewards]{all other approaches,} \MM[.]{confirming that our proposed L2R method yields a superior reward oracle.} %
\MM[We see that the higher quality of $\hat{\sigma}^U_x$ helps \rise{} generate significantly better summaries compared to methods using other rewards.]{} %

\begin{table}[!t]
  \centering\small
    \begin{tabular}{l c c | c c | c c}
      \toprule
      & \multicolumn{2}{c|}{DUC'01} & \multicolumn{2}{c|}{DUC'02} 
      & \multicolumn{2}{c}{DUC'04} \\
      & $R_1$ & $R_2$ & $R_1$ & $R_2$ & $R_1$ & $R_2$ \\
      \midrule
      ICSI & 33.31 & 7.33 & 35.04 & 8.51 & 37.31 & 9.36 \\
      PriorSum & 35.98 & 7.89 & 36.63 & 8.97 & 38.91 & 10.07 \\ 
      TCSum & \textbf{36.45} & 7.66 & 36.90 & 8.61 & 38.27 & 9.66 \\ 
      TCSum$^-$  & 33.45  & 6.07 & 34.02 & 7.39 & 35.66 & 8.66 \\ 
      SRSum & 36.04 & 8.44 & \textbf{38.93} & \textbf{10.29} & 39.29 & 10.70 \\
      \midrule
      DeepTD  & 28.74 & 5.95 & 31.63 & 7.09 & 33.57 & 7.96 \\
      REAPER & 32.43 & 6.84 & 35.03 & 8.11 & 37.22 & 8.64 \\
      \midrule
      \rise{}
      & 34.73 & \textbf{8.66} & 37.11 & 9.12 & \textbf{39.34} & \textbf{10.73} \\
      \bottomrule
    \end{tabular}
  \caption{Results of non-RL (top)\ChM, \ctopic{} (DeepTD) and \intopic{} (REAPER) RL approaches (\MM[centre]{middle}) compared with \rise{}. 
  }
  \label{table:duc_results}
\end{table}

\ChM[W]{In Table~\ref{table:duc_results}, w}e compare \rise{} \MM[against]{with} \MM[\mbox{RL-based} and \mbox{non-RL-based}]{\mbox{non-RL-based} and \mbox{RL-based}} summarisation systems. 
%
%
\MM[As f]{F}or \MM[non-RL-based]{\mbox{non-RL-based}} systems, we \MM[consider]{report} 
\emph{ICSI} \cite{Gillick:2009:SGM:1611638.1611640}
\ChM[that uses an integer linear programming solver
to build summaries by maximising the bigram overlap between
the summary and the input documents]{maximising the bigram overlap of summary and input using integer linear programming},
\ChM[the CNN-based ]{}\emph{PriorSum} \cite{DBLP:conf/acl/CaoWLLZW15}
\ChM[that uses CNNs to learn the quality of sentences,]{learning sentence quality with CNNs,}
\emph{TCSum} \cite{DBLP:conf/aaai/CaoLLW17}
\ChM[that first uses text classification methods
to find the category a document cluster falls into, and uses
the category information to improve the summarisation performance]{employing text classification of the input documents},
the variant of \mbox{TCSum} \ChM[that without]{without the} text classification pre-training
(TCSum$^-$) 
and \ChM[the \emph{SRSum} system]{\emph{SRSum}} \cite{DBLP:journals/tois/RenCRWNMR18},
which learns sentence relations with both \ChM[both word-level]{word-}
and sentence-level attentive neural networks \ChM[and
uses the learnt sentence relations to
learn the]{to estimate} salience\ChM[ levels of sentences in the input documents]{}. 

%
For RL-based systems, we re-implement REAPER 
\cite{rioux2014emnlp} as an \intopic{} RL, and 
DeepTD as a \mbox{\ctopic{}} RL. 
DeepTD is adapted from the DQN-based RL summariser 
\cite{DBLP:journals/ijon/YaoZLW18}
and is trained by taking $U^*$ as the rewards (see \S\ref{sec:experiments}).
It uses InferSent to represent summaries. 
%
%
To improve the efficiency and performance of
DeepTD, we use\ChM[ multiple techniques including]{}
\emph{memory replay} and \emph{periodic update}.
Further details and \MM[the pseudo code]{the algorithm} of DeepTD 
are in the supplementary material.
%

\rise{} significantly outperforms the other RL-based systems. 
Note that \rise{} and REAPER use the identical RL algorithm
for \intopic{} policy learning;
hence the improvement of \rise{}
is due to the higher quality of the L2R-learnt reward $\hat{\sigma}^U_x$.
\rise{} outperforms DeepTD because \ChM[the DUC datasets are too small
to train a \mbox{\ctopic{}} policy indicating that our model does not need very data]{training \mbox{\ctopic{}} policies requires much more data than available in the DUC datasets. 
}
\ChM[\rise{} variants also perform]{At the same time, \rise{} performs} on par with neural-based TCSum and SRSum,
while it requires significantly less data \MM[or]{and} time to train, as shown next.

We run 
\ChM[\rise-CE]{\rise{}}, SRSum, DeepTD and 
REAPER on the same workstation with a 4-core CPU, 
8\,GB memory and no \MM[GPU]{GPUs}. 
Table~\ref{table:time_compare} compares their average
training and test time for each document cluster.
\rise{} reduces the training time of SRSum by two orders of
magnitude\ChM[; a]{. A}t test time, \rise{} takes reasonable time
to train the \mbox{\intopic{}} policy, and we believe \ChM{that} it can be 
further reduced by using more efficient RL algorithms and 
employing techniques like memory replay or reward shaping\ChM[;
we leave it for future work.]{.}

\MM[Compared with]{Unlike} TCSum, \rise{} requires \ChM[far less  data to train]{no additional training data}:
TCSum uses 1.8 million news articles from New York Times
and their category annotations (e.g.\ Health, Politics, Business) \MM[to train their model]{for training. }
\MM[and they show that]{ It is worth noting that} without \MM{using such a massive extra data for} the text classification \MM[in pre-training]{step}, the performance of TCSum 
significantly drops (see TCSum$^-$ in Table~\ref{table:duc_results}).
\MM[Also, t]{T}he training time of TCSum \MM[can]{is} unlikely \MM[be]{to be} shorter
than \rise{} \MM[, because]{since} 
TCSum requires \MM[pre-training]{to train} 
a CNN-based text classifier \MM[and then]{before} training a CNN-based sentence selector. 

\begin{table}[t]
    \centering\small
    \begin{tabular}{l r r r r}
    \toprule
    & SRSum & DeepTD & REAPER & \rise{} \\
    \midrule
    Training & 810\,s & 1,560\,s & N/A & 2\,s  \\
    Test     & 7\,s & 4\,s & 31\,s & 34\,s \\
    \bottomrule
    \end{tabular}
    \caption{Averaged training and test time for each document cluster.
    Note that REAPER does not have \ChM[the]{a} training phase.}
    \label{table:time_compare}
\end{table}

To summarise, \rise{} significantly outperforms \mbox{RL-based} models, and it yields competitive performance compared
\MM[to]{with} the \mbox{state-of-the-art} neural \MM[based methods]{summarisers}, \MM[while requires
only a small fraction of the time or data in training.]
{with the benefit of needing less training time and data.}

\section{Discussion \& Related Work}
\label{sec:related_work}

\rise{} is proposed as a RL-based summarisation paradigm,
but it can be applied to other NLG tasks where
RL is widely used,
e.g., translation, sentence simplification and 
dialogue generation.
For example, to apply \rise{} to translation, 
we just let $\mathcal{X}$ (see \S\ref{sec:background})
be the set of texts in the source 
language and let $\mathcal{Y}_x$
be the set of all possible translations in the
target language for input $x$;
and the error bound of \rise{} (Theorem~\ref{the_theorem})
still holds as it is indifferent to the contents 
in $\mathcal{X}$ and $\mathcal{Y}_x$.
Hence, in this section, we discuss the related
work in the context of NLG in general.

\emph{Reward learning} 
recently receives increasing interests from the 
machine learning community 
\cite{DBLP:conf/nips/IbarzLPILA18,DBLP:conf/nips/ZhengOS18},
but it is largely overlooked in \nlg{} until now.
Unlike classic RL applications, \MM[e.g.,]{e.g.}\ robotics and games,
where rewards are provided or 
easy to design,  
\nlg{} tasks lack strong metrics \MM[for measuring]{to measure}
the quality of the output.
\MM[Widely used]{\mbox{Well-known}} rewards, e.g. \MM[such as BLEU and] ROUGE, \MM[have
been]{are} criticised for their low correlation
with human evaluations \cite{chaganty2018price}. 
Recent work \MM[also] suggests that \MM[a small improvement of]{improving}
the reward function \MM[can significantly boost]{boosts} the performance
of RL-based NLG \cite{DBLP:conf/emnlp/KryscinskiPXS18}.
%
%
%

Besides using metrics such as BLEU and ROUGE as rewards
to train RL-based NLG,
novel rewards have been designed.
%
%
\newcite{DBLP:conf/emnlp/KryscinskiPXS18} propose a
new reward function for encouraging RL-based summarisers
to \ChM[use more]{prefer} \emph{novel} words in abstractive summaries.
For sentence simplification,
\newcite{DBLP:conf/emnlp/ZhangL17} propose a reward function
measuring the simplicity, relevance and \ChM[grammar]{grammatical} correctness
of candidate outputs.
For machine translation, 
\newcite{DBLP:conf/emnlp/NguyenDB17} propose a simulated
user, which simulates human's ratings on translations,
and they use the simulated user to provide rewards for \ChM{an} RL-based
translator.
However, all rewards discussed above require reference outputs,
unlike our learnt reward function which can be used
to provide rewards at test time when no reference outputs
are available.

\newcite{DBLP:conf/aaai/WuH18} and
\newcite{DBLP:conf/naacl/BosselutCHGHC18} 
recently propose to use a large volume of unlabelled data
to learn a scorer measuring the \emph{
discourse coherence degree} of sentences, and use the scorer
as rewards to train \ctopic{} RL. 
We go beyond their work by proving the 
soundness of the combination of reward learning and RL,
and training \intopic{} instead of  \ctopic{} \ChM[RL]{policies} so as to reduce 
training \ChM[expenses]{time and data}.

\ChM[Some RL-based]{RL-based} \emph{interactive \nlg} methods 
elicit human feedback as rewards.
\newcite{DBLP:conf/acl/RiezlerKU18} elicit 
pointwise and pairwise feedback on candidate
translations to train a \ctopic{} policy.
\newcite{DBLP:conf/emnlp/GaoMG18} propose \ChM[to use
active learning algorithms]{using active learning} to select appropriate candidate
summary pairs \ChM[to human, and use the collected]{and acquire human} preferences
to improve \ChM[the]{a} heuristics-based reward\ChM[s of 
\newcite{ryang2012emnlp}.]{.}
However, these methods require \ChM[considerable human]{much} feedback
to bring satisfactory results\ChM[: for example,
the method proposed by \newcite{DBLP:conf/emnlp/GaoMG18}
needs the user to provide preferences on]{, e.g.,} at least 50 \ChM[pairs
of summaries]{summary pairs} to yield significant improvement\ChM{s \cite{DBLP:conf/emnlp/GaoMG18}}.
\rise{} can be used as a pre-training stage \ChM[for
the above]{for} interactive \ChM[approaches]{methods},
\ChM[such that people can first use \rise{} to 
learn]{as \rise{} first learns} a high-quality \ctopic{} reward function
and then use\ChM{s} the interactive NLG techniques
to elicit a small \ChM[number]{amount} of feedback to fine-tune
a user-\ChM[specific and]{\ and} input-specific reward function,
so at to generate higher-quality and \ChM[more personalised texts]{personalised results}.

\section{Conclusion}
\label{sec:conclusion}
We propose a novel RL paradigm 
called \rise{},
which learns a reward function from a reward oracle
with \MM{\mbox{learning-to-rank} (L2R)} algorithms at training time\MM[and]{, and then} 
 uses the learnt reward to train
\intopic{} RL policies at test time.
Compared with the widely \MM[used]{employed} 
\ctopic{} RL-based summarisation approaches,
\rise{} avoids the expensive learning of \mbox{\ctopic{}} policies
but, instead, \MM{efficiently performs}\MM[relatively cheap]{} L2R and \intopic{} RL learning.
\MM[Also]{Moreover}, \rise{} avoids the 
\MM[difficult]{arduous} reward design required
in \intopic{} \mbox{RL-based} summarisation approaches.
We prove that, 
with proper L2R and RL algorithms,
\rise{} guarantees to produce near-optimal outputs.
\ChM[Experiments on EMDS show that, even using standard
reward features and linear L2R and RL algorithms, 
\rise{} yields performance on par 
with the state-of-the-art neural-based methods while
only requiring a small fraction of data or time to train.]{Our experiments show that even with linear L2R and standard RL algorithms, \rise{} yields performance on par 
with the state of the art while
only requiring a small fraction of data and time to train.}
%
%
%
Our work lays the theoretical foundation for reward learning
in  NLG, and we hope it will encourage further
research in this direction.

\section*{Acknowledgements}
%
This work has been supported by the German
Research Foundation (DFG), as part of the 
QA-EduInf project (
GU 798/18-1 and 
RI 803/12-1) and through the 
German-Israeli Project Cooperation 
(DIP, 
DA 1600/1-1 and 
GU 798/17-1).


\bibliographystyle{named}
\bibliography{general_long}

\end{document}